\newtheorem{theorem}{Theorem}
\def\eqref#1{equation~\ref{#1}}
\def\1{\bm{1}}
\def\vw{{\bm{w}}}
\def\vx{{\bm{x}}}
\def\vy{{\bm{y}}}
\def\mW{{\bm{W}}}
\DeclareMathAlphabet{\mathsfit}{\encodingdefault}{\sfdefault}{m}{sl}
\SetMathAlphabet{\mathsfit}{bold}{\encodingdefault}{\sfdefault}{bx}{n}
\def\sR{{\mathbb{R}}}
\newcommand{\E}{\mathbb{E}}
\def\ourmethod{\textit{HORDE}}
\ificcvfinal\pagestyle{empty}\fi
\begin{document}

%%%%%%%%% TITLE
\title{Metric Learning With HORDE: High-Order Regularizer for Deep Embeddings}

\author{Pierre Jacob$^1$,~
        David Picard$^{1,2}$,~
        Aymeric Histace$^1$,~
        Edouard Klein$^3$~\\
        $^1$ETIS UMR 8051, Universit{\'e} Paris Seine, UCP, ENSEA, CNRS, F-95000, Cergy, France \\
        $^2$LIGM, UMR 8049, \'Ecole des Ponts, UPE, Champs-sur-Marne, France \\
        $^3$C3N, P\^{o}le Judiciaire de la Gendarmerie Nationale, 5 boulevard de l'Hautil, 95000 Cergy, France\\
        \small \{pierre.jacob, picard, aymeric.histace\}@ensea.fr}

\maketitle

\begin{abstract}
Learning an effective similarity measure between image representations is key to the success of recent advances in visual search tasks (e.g. verification or zero-shot learning).
Although the metric learning part is well addressed, this metric is usually computed over the average of the extracted deep features.
This representation is then trained to be discriminative.
However, these deep features tend to be scattered across the feature space.
Consequently, the representations are not robust to outliers, object occlusions, background variations, etc.
In this paper, we tackle this scattering problem with a distribution-aware regularization named \ourmethod\footnote{Code is available at \url{https://github.com/pierre-jacob/ICCV2019-Horde}}.
This regularizer enforces visually-close images to have deep features with the same distribution which are well localized in the feature space.
We provide a theoretical analysis supporting this regularization effect.
We also show the effectiveness of our approach by obtaining state-of-the-art results on 4 well-known datasets (Cub-200-2011, Cars-196, Stanford Online Products and Inshop Clothes Retrieval).

    % In the case of similar images, the feature distributions are forced in the feature space.
    % We provide formal upper and lower bounds on well-known distances between distributions, such as the Maximum Mean Discrepancy and the Wasserstein distance.
    % Empirically, \ourmethod \ consistently improves deep metric learning architectures, leading to state-of-the-art results on 3 out of 4 deep metric learning datasets.
\end{abstract}

\begin{figure*}
    \centering
    \subfloat[Representations using all deep features \label{fig:illustration_full}]{\includegraphics[width=0.32\linewidth]{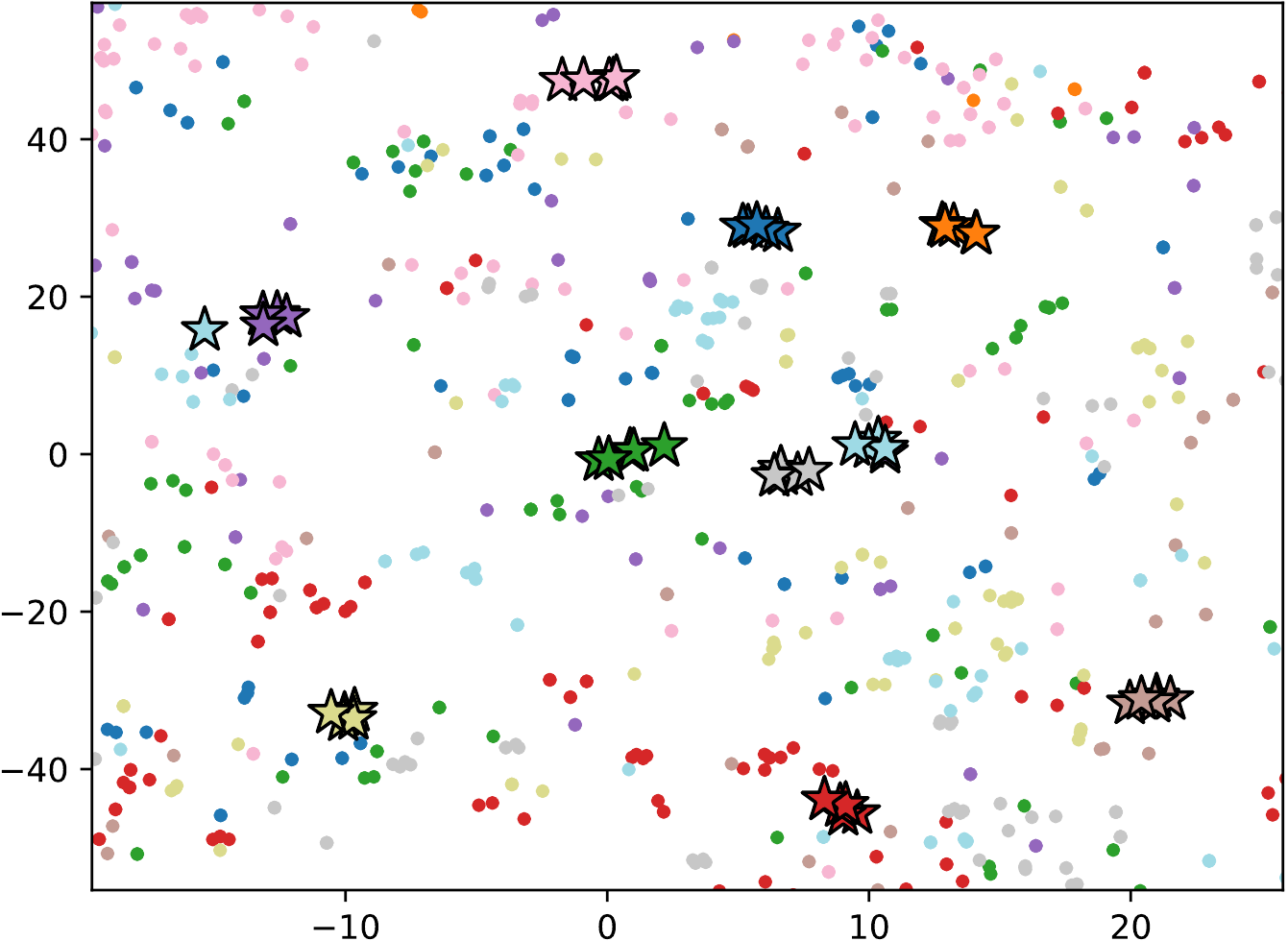}}
    \hfill
    \subfloat[Representations using $1/6$-th of deep features \label{fig:illustration_low}]{\includegraphics[width=0.32\linewidth]{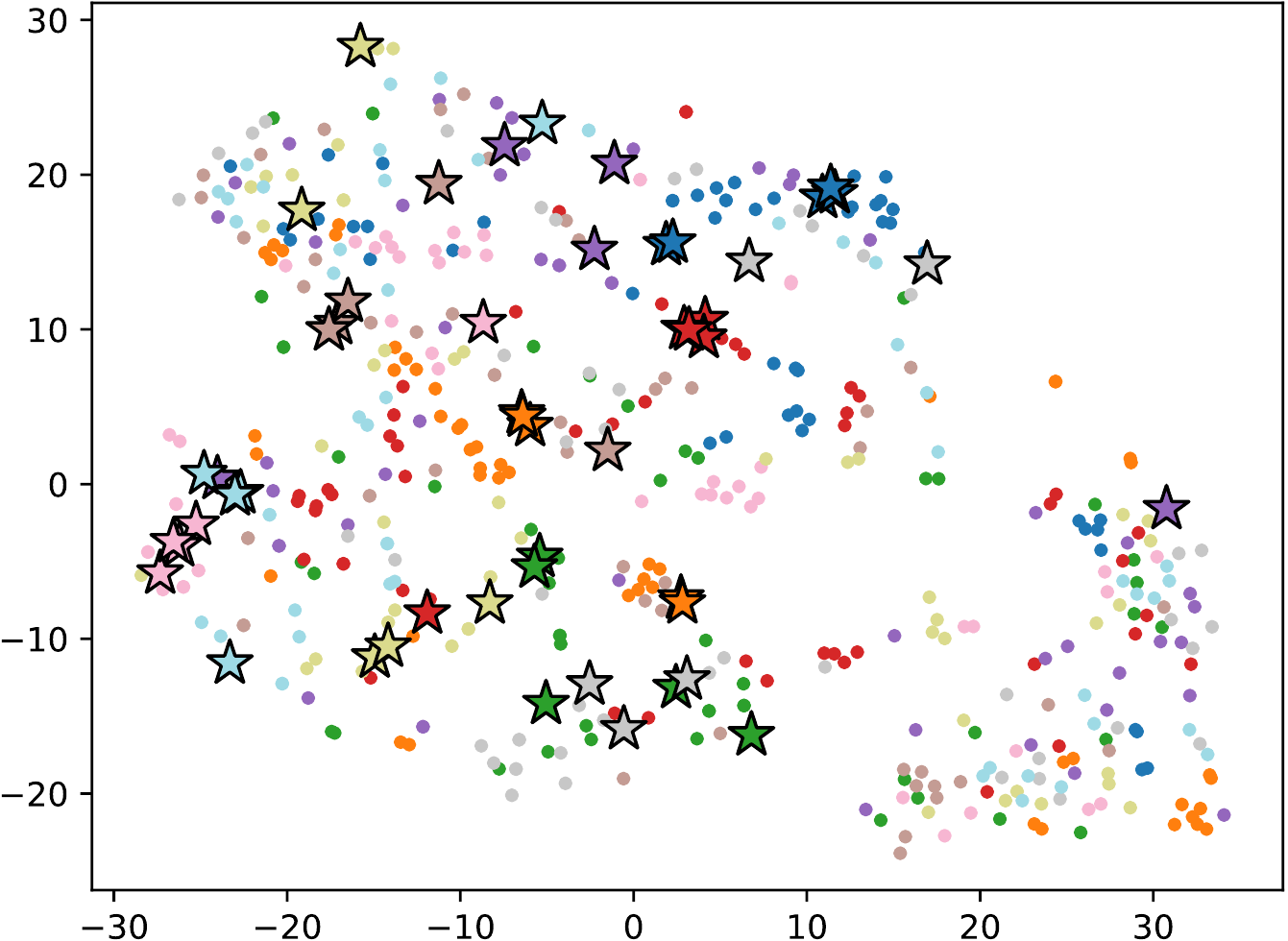}}
    \hfill
    \subfloat[Representations using $1/6$-th of deep features trained with \ourmethod \label{fig:illustration_horde}]{\includegraphics[width=0.32\linewidth]{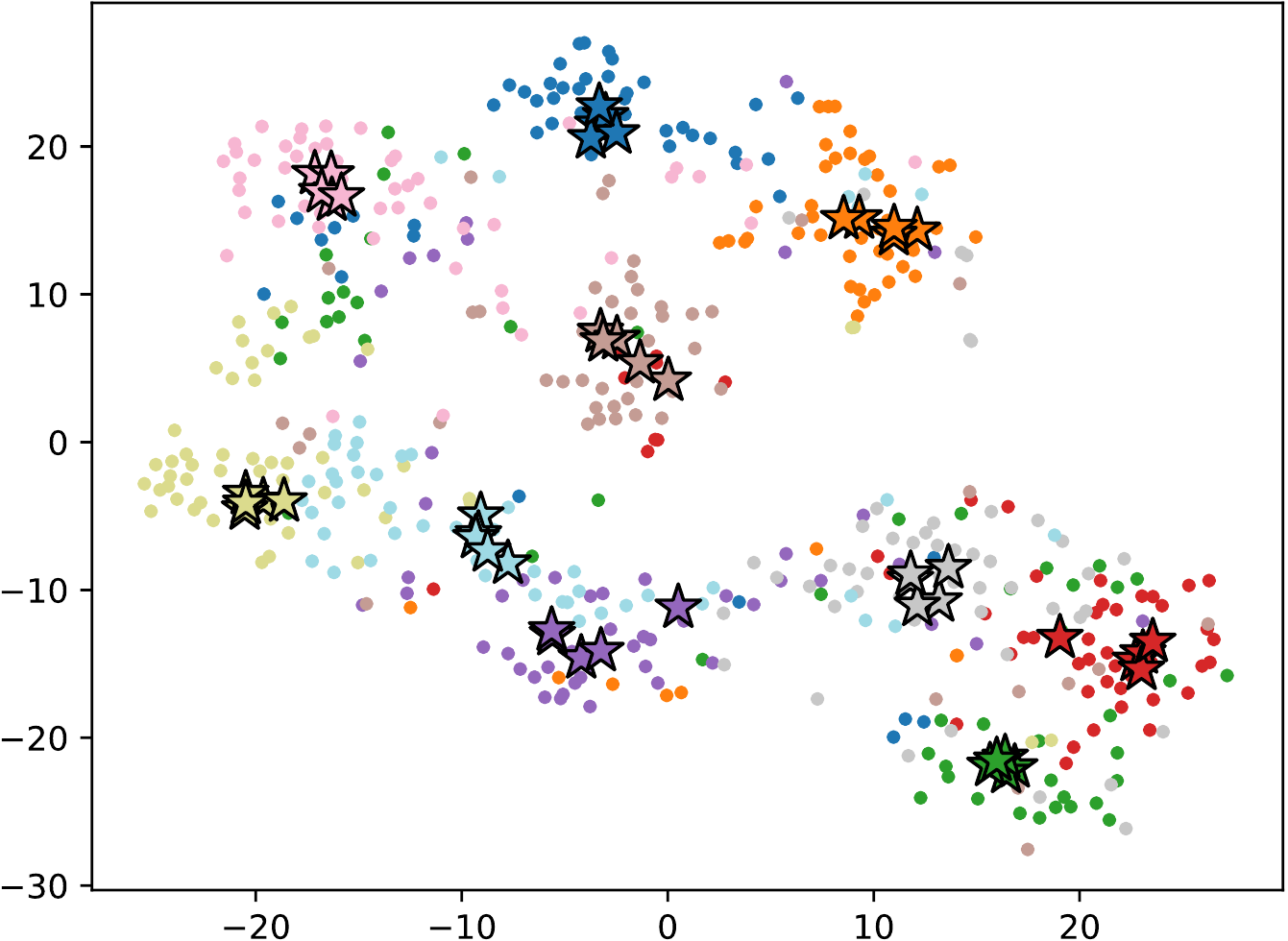}}
    \hfill
    \caption{2D visualizations of representations (stars) and deep features (points) using t-SNE computed from a DML architecture on MNIST dataset with \textbf{one color per class}. Representations and features come from the \textbf{training set}. \autoref{fig:illustration_full} shows discriminative representations but with scattered deep features (Remark the scale of the axes). \autoref{fig:illustration_low} shows representations computed with $1/6$-th of the deep features, leading to a disorganized space. \autoref{fig:illustration_horde} shows the same model trained with \ourmethod : the deep features are well concentrated and the representations computed using $1/6$-th of the deep features are organized according to the classes (best viewed on a computer screen).}
    \label{fig:illustration}
\end{figure*}

\section{Introduction}\label{sec:introduction}
    Deep Metric Learning (DML) is an important yet challenging topic in the Computer Vision community with numerous applications such as visual product search \cite{Liu_2016_CVPR_INSHOP, Song_2016_CVPR}, multi-modal retrieval \cite{Carvalho_2018_SIGIR, Wehrmann_2018_CVPR}, face verification and clustering \cite{Schroff_2015_CVPR}, person or vehicle identification \cite{Liu_2016_CVPR, Zhou_2017_ICCV}.
    To deal with such applications, a DML method aims to learn an embedding space where all the visually-related images (\emph{e.g.}, images of the same car model) are close to each other and dissimilar ones (\emph{e.g.}, images of two cars from the same brand but from different models) are far apart.
    
    Recent contributions in DML can be divided into three categories.
    A first category includes methods that focus on batch construction to maximize the number of pairs or triplets available to compute the similarity (\emph{e.g.}, N-pair loss \cite{Sohn_2016_NIPS}).
    A second category involves the design of loss functions to improve the generalization (\emph{e.g.}, binomial deviance \cite{Ustinova_2016_NIPS}).
    The third category covers ensemble methods that tackle the embedding space diversity (\emph{e.g.}, BIER \cite{Opitz_2017_ICCV}).
    
    This similarity metric is trained jointly with the image representation which is computed using deep neural network architectures such as GoogleNet \cite{Szegedy_2015_CVPR} or BN-Inception \cite{Ioffe_2015_ICML}.
    For all of these networks, the image representations are obtained by the aggregation of the deep features using a Global Average Pooling \cite{Zhou_2016_CVPR}.
    Hence, the deep features are summarized using the sample mean, and the training process makes sure that the sample mean is discriminative enough for the target task.
    
    % All of the DML methods rely on image representations which are computed over deep features.
    % These features are extracted using recent a deep architecture such as GoogleNet \cite{Szegedy_2015_CVPR}, BN-Inception \cite{Ioffe_2015_ICML}, InceptionV3 \cite{Szegedy_2016_CVPR} or ResNet \cite{He_2016_CVPR}.
    % The image representations are obtained by the aggregating the deep features using a Global Average Pooling \cite{Zhou_2016_CVPR}.
    % Hence, the deep features are summarized using the sample mean, and the training process makes sure that the sample mean is discriminative enough for the target task.
    
    Our insight is that ignoring the characteristics of the deep feature distribution leads to a lack of distinctiveness in the deep features.
    We illustrate this phenomenon in \autoref{fig:illustration}.
    In \autoref{fig:illustration_full}, we train a DML model on MNIST and plot both the deep features and the image representations from a set of images sampled from the training set.
    We observe that the representations are perfectly organized while the deep features are in contrast scattered in the entire space.
    As the representations are obtained using the sample mean only, they are sensitive to outliers or sampling problems (occlusions, illumination, background variation, etc.), which we refer to as the \emph{scattering problem}.
    We illustrate this problem in \autoref{fig:illustration_low} where the representations are computed using the same architecture but by sampling only $1/6$-th of the original deep features.
    As we can see, the resulting representations are no longer correctly organized.
    
    In this paper, we propose \ourmethod, a High-Order Regularizer for Deep Embeddings which tackles this scattering problem.
    By minimizing (resp. maximizing) the distance between high-order moments of the deep feature distributions, this DML regularizer enforces deep feature distributions from similar (resp. dissimilar) images to be nearly identical (resp. to not overlap).
    As illustrated in \autoref{fig:illustration_horde}, our \ourmethod \ regularizer produces well localized features, leading to robust image representations even if they are computed using only $1/6$-th of the original deep features.
    
    Our contributions are the following: First, we propose a High-Order Regularizer for Deep Embeddings (\ourmethod) that reduces the scattering problem and allows the sample mean to be a robust representation.
    We provide a theoretical analysis in which we support this claim by showing that \ourmethod \ is a lower bound of the Wasserstein distance between the deep feature distributions while also being an upper-bound of their Maximum Mean Discrepancy.
    Second, we show that \ourmethod \ consistently improves DML with varying loss functions, even when considering ensemble methods. Using \ourmethod, we are able to obtain state of the art results on four standard DML datasets (Cub-200-2011 \cite{CUB_200_2011}, Cars-196 \cite{CARS_196}, In-Shop Clothes Retrieval \cite{Liu_2016_CVPR_INSHOP} and Stanford Online Products \cite{Song_2016_CVPR}).
    
    The remaining of this paper is organized as follows:
    In \autoref{sec:related_work}, we review recent works on deep metric learning and how our approach differs.
    In \autoref{sec:method_overview}, after an overview of our proposed method, we present the practical implementation of \ourmethod \ as well as a theoretical analysis.
    In \autoref{sec:sota} we compare our proposed architecture with the state-of-the-art on four image retrieval datasets.
    We show the benefit of \ourmethod \ regularization for different loss functions and an ensemble method.
    In \autoref{sec:abation_studies} we conduct extensive experiments to demonstrate the robustness of our regularization and its statistical consistency.
    %before we conclude in \autoref{sec:ccl}.

\begin{figure*}
    \centering
    \includegraphics[width=\linewidth]{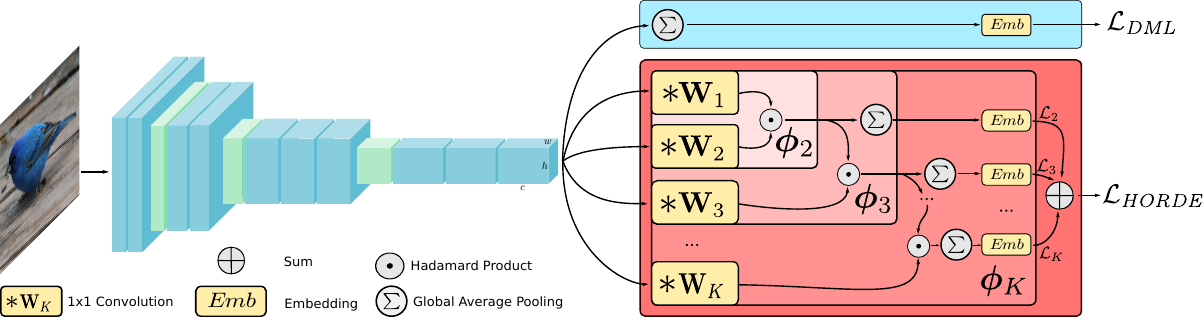}
    \caption{Global overview of our \ourmethod \ architecture. The deep convolutional neural network extracts $h \times w \times c$ deep features. The standard architecture (top blue block) relies on a global average pooling and an embedding before computing the $\mathcal{L}_{\text{DML}}$ loss. The bottom red block is our \ourmethod \ regularizer, composed by the approximation of all high-order moments $\boldsymbol{\phi}_k$,  global average pooling and embeddings before computing the sum of each $\mathcal{L}_k$ loss.}
    \label{fig:horde_architecture}
\end{figure*}
    
\section{Related Work}\label{sec:related_work}
    In DML, we jointly learn the image representations and an embedding in such a way that the Euclidean distance corresponds with the semantic content of the images.
    Current approaches use a pre-trained CNN to produce deep features, then they aggregate these features using Global Average Pooling \cite{Zhou_2016_CVPR}.
    Finally they learn the target representation with a linear projection.
    % In Deep Metric Learning, the goal is to learn image representations such that the euclidean distance corresponds with the semantic content of the images.
    % Current approaches use a pre-trained CNN to produce deep features, then they aggregate these features using Global Average Pooling \cite{Zhou_2016_CVPR}, and finally they learn the target representation with a linear projection.
    The whole network is fine-tuned to solve the metric learning task according to three criteria: a loss function, a sampling strategy and an ensemble method.
    
    Regarding the loss function, popular approaches consider pairs \cite{Chopra_CVPR_2005} or triplets \cite{Schroff_2015_CVPR} of similar/dissimilar samples.
    Recent works generalize these loss functions to larger tuples \cite{Chen_2017_CVPR, Song_2016_CVPR, Sohn_2016_NIPS, Ustinova_2016_NIPS} or improve the design \cite{Wang_2018_CVPR, Wang_2017_ICCV, Yu_2018_ECCV}.
    The sampling of the training tuples receive plenty of attention \cite{Song_2016_CVPR, Schroff_2015_CVPR, Sohn_2016_NIPS}, either through mining \cite{Harwood_2017_ICCV, Schroff_2015_CVPR}, proxy based approximations \cite{Movshovitz-Attias_2017_ICCV, Song_2017_CVPR} or hard negative generation \cite{Duan_2018_CVPR, Lin_2018_ECCV}.
    Finally, ensemble methods have recently become an increasingly popular way of improving the performances of DML architectures \cite{Kim_2018_ECCV, Opitz_2017_ICCV, Xuan_2018_ECCV, Yuan_2017_ICCV}.
    Our proposed \ourmethod \ regularizer is a complementary approach.
    We show in \autoref{sec:sota} that it consistently improves these popular DML models.
    
    Recent approaches also consider a distribution analysis for DML \cite{Rippel_2016_ICLR,Lin_2018_ECCV}.
    Contrarily to us, they only consider the distribution of the representations to design a loss function or a hard negative generator but they do not take into account the distribution of the underlying deep features.
    Consequently, they do not address the scattering problem.
    % However and contrarily to us, they consider the distribution of the representations and not the distribution of the underlying deep features.
    More precisely, Magnet loss \cite{Rippel_2016_ICLR} proposes to better represent a given class manifold by learning a $K$-mode distribution instead of the standard uni-mode assumption.
    To that aim, the per-class distribution is approximated using $K$-means clustering.
    The proposed loss tries to minimize the distance between a representation and its nearest class mode and tries to maximize the distance between all modes of all other classes.
    However, since the magnet loss is directly applied to the sample means of the deep features, it leads to the scattering problem illustrated in \autoref{fig:illustration}.
    In DVML \cite{Lin_2018_ECCV}, the authors assume that the representations follow a per-class Gaussian distribution. They propose to estimate the parameters of these distributions using a variational auto-encoder approach.
    Then, by sampling from a Gaussian distribution with the learned parameters, they are able to generate artificial hard samples to train the network.
    However, no assumption is made on the distribution of the deep features, which leads to the scattering problem illustrated in \autoref{fig:illustration} (see also \cite{Lin_2018_ECCV}, Figure 1).
    In contrast, we show that focusing on the distribution of the deep features reduces the scattering problem and improves the performances of DML architectures.
    % In contrast, we focus on the distribution of the deep features and we show that \ourmethod \ reduces the scattering problem and improves performances of DML architectures.
    
    In the next section, we first give an overview of the proposed \ourmethod \ regularization.
    Then, we describe the practical implementation of the high-order moments computation.
    Finally, we give theoretical insights which support the regularization effect of \ourmethod.

\section{Proposed High-Order Regularizer}\label{sec:method_overview}
    We first give an overview of the proposed method in \autoref{fig:horde_architecture}.
    We start by extracting a deep feature map of size $h \times w \times c$ using a CNN where $h$ and $w$ are the height and width of the feature map and $c$ is the deep features dimension.
    Following standard DML practices, these features are aggregated using a Global Average Pooling to build the image representation and are projected into an embedding space before a similarity-based loss function is computed over these representations (top-right blue box in \autoref{fig:horde_architecture}).
    % we compute a similarity-based loss function (blue part in \autoref{fig:horde_architecture}).
        
    In \ourmethod, we directly optimize the distribution of the deep features by minimizing (respectively maximizing) a distance between the deep feature distributions of similar images (respectively dissimilar images).
    We approximate the deep feature distribution distance by computing high-order moments (bottom-right red box in \autoref{fig:horde_architecture}).
    We recursively approximate the high-order moments and we compute an embedding after each of these approximations.
    % We produce a recursive approximation of the high-order moments and we compute an embedding for each of these high-order moments.
    % We produce an embedding for each of the high order moments in a recursive manner.
    Then, we apply a DML loss function on each of these embeddings.
    
    % In the remaining of this section, we first detail the computation of the high order moments and the associated losses. Then, we present a theoretical analysis of \ourmethod\ that provides insights on its effectiveness at reducing the scattering problem.
        
    \subsection{High-order computation}\label{sec:met_implementation}
        In practice, the computation of high-order moments is very intensive due to their high dimension.
        Furthermore, it has been shown in \cite{Jegou_2012_ECCV, Opitz_2017_ICCV} that an independence assumption over all high-order moment components is unrealistic.
        Hence, we rely on factorization schemes to approximate their computation, such as Random Maclaurin (RM) \cite{Kar_PMLR_2012}.
        The RM algorithm relies on a set of random projectors to approximate the inner product between two high-order moments.
        In the case of the second-order, we sample two independent random vectors $\vw_1, \vw_2 \sim \mathcal{W}$ where $\mathcal{W}$ is a uniform distribution in $\{-1, +1\}$.
        For two non random vectors $\vx$ and $\vy$, the inner product between their second-order moments can be approximated as:
        \begin{align}
            \nonumber \E_{\vw_1, \vw_2 \sim \mathcal{W}}[\phi_2(\vx) \phi_2(\vy)] &= \left<\vx \ ; \ \vy \right>^2 \\
            &= \left<\vx \otimes \vx \ ; \ \vy \otimes \vy \right> 
        \end{align}
        where $\otimes$ is the Kronecker product, $\E_{\vw_1, \vw_2 \sim \mathcal{W}}$ is the expectation over the random vectors $\vw_1$ and $\vw_2$ which follow the distribution $\mathcal{W}$ and $\phi_2(\vx) = \left<\vw_1 \ ; \ \vx \right>\left<\vw_2 \ ; \ \vx \right>$.
        This approach easily holds to estimate any inner product between $K$-th moments:
        \begin{align}
            \nonumber \E_{\vw_{k} \sim \mathcal{W}}[\phi_K(\vx) \phi_K(\vy)] &= \left<\vx \ ; \ \vy \right>^K \\
            &= \left<\underbrace{\vx \otimes \dots \otimes \vx}_{K \ \text{times}}\ ; \underbrace{\vy \otimes \dots \otimes \vy}_{K \ \text{times}} \right> 
        \end{align}
        where $\phi_K(\vx)$ is computed as:
        \begin{align}
            \phi_K(\vx) = \prod_{k=1}^K \left<\vw_k \ ; \ \vx \right>
        \end{align}
        In practice, we approximate the expectation of this quantity by using the sample mean over $d$ sets of these random projectors.
        That is, we sample independent random matrices $\mW_1, \mW_2, ..., \mW_K \in \sR^{c \times d}$ and we compute the vector $\boldsymbol{\phi}_K(\vx) \in \sR^d$ that approximates the $K$-th moments of $\vx$ with the following equation:
        \begin{align}\label{eq:maclaurin}
            \boldsymbol{\phi}_K(\vx) = \left( \mW_1^\top \vx \right) \odot \left( \mW_2^\top \vx \right) \odot \dots \odot \left( \mW_K^\top \vx \right)
        \end{align}
        where $\odot$ is the Hadamard (element-wise) product.
        Thus, the inner product between the $K$-th moments is:
        \begin{align}
            \left<\vx \ ; \ \vy \right>^K \approx \frac{1}{d} \left<\boldsymbol{\phi}_K(\vx) \ ; \ \boldsymbol{\phi}_K(\vy) \right>
        \end{align}
        
        However, Random Maclaurin produces a consistent estimator independently of the analyzed distributions, and thus also encodes non informative high-order moment components.
        To ignore these non-informative components, the projectors $\mW_k$ can be learned from the data.
        However, the high number of parameters in $\mathcal{O}(K^2cd)$ makes it difficult to learn a consistent estimator, as we empirically show in \autoref{sec:abla_statistical_consistency}.
        We solve this problem by computing the high-order moment approximation using the following recursion:
        \begin{align}\label{eq:cascaded_computation}
            \boldsymbol{\phi}_k(\vx) = \boldsymbol{\phi}_{k-1}(\vx) \odot \left(\mW_k^\top \vx\right)
        \end{align}
        This last equation leads to the proposed cascaded architecture for \ourmethod \ summarized in \autoref{algo:horde}.
        % The full high-order moment approximation procedure is shown in \autoref{algo:horde}.
        We empirically show in \autoref{sec:abla_statistical_consistency} that this recursive approach produces a consistent estimator of the informative high-order moment components.
        
        Then, the \ourmethod \ regularizer consists in computing a DML-like loss function on each of the high-order moments, such that similar (respectively dissimilar) images have similar (respectively dissimilar) high-order moments:
                \begin{align}
            \mathcal{L}_{\ourmethod} = \sum_{k=2}^K \mathcal{L}_k \left(\E_{\vx \sim \mathcal{I}}[\boldsymbol{\phi}_k(\vx)], \E_{\vy \sim \mathcal{J}}[\boldsymbol{\phi}_k(\vy)]\right)
        \end{align}
        In practice, we cannot compute the expectation $\E_{\vx \sim \mathcal{I}}[\boldsymbol{\phi}_k(\vx)]$ since the distribution of $\vx$ is unknown.
        We propose to estimate it using the empirical estimator:
        \begin{align}\label{eq:loss_horde}
            \nonumber \mathcal{L}_{\ourmethod}(\mathcal{I}, \mathcal{J}) &= \sum_{k=2}^K \mathcal{L}_k \left(\frac{1}{\vert\mathcal{I}\vert}\sum_{\vx_i \in \mathcal{I}} \boldsymbol{\phi}_k(\vx_i),\right.\\
            &\quad\qquad\qquad\qquad \left. \frac{1}{\vert\mathcal{J}\vert}\sum_{\vx_j\in\mathcal{J}} \boldsymbol{\phi}_k(\vy_j)\right)
        \end{align}
        where $\{\vx_i \in \mathcal{I}\}$ and $\{\vx_j \in \mathcal{J}\}$ are the sets of deep features extracted from images $\mathcal{I}$ and $\mathcal{J}$.
        
        Hence, the DML model is trained on a combination of a standard DML loss and the \ourmethod \ regularizer on pairs of images $\mathcal{I}$ and $\mathcal{J}$:
        \begin{align}
            \mathcal{L}(\mathcal{I}, \mathcal{J}) &= \mathcal{L}_\textit{DML}(\mathcal{I}, \mathcal{J}) + \mathcal{L}_{\ourmethod}(\mathcal{I}, \mathcal{J})
        \end{align}
        This can easily be extended to any tuple based loss function.
        In practice, we use the same DML loss function for \ourmethod \ ($\forall k, \mathcal{L}_k = \mathcal{L}_\textit{DML}$).
        
        Remark also that at inference time, the image representation $\boldsymbol{\phi}_1(\mathcal{I})$ consists only of the sample mean of the deep features:
        \begin{align}
            \boldsymbol{\phi}_1(\mathcal{I}) = \frac{1}{\vert \mathcal{I}\vert}\sum_{\vx_i \in \mathcal{I}} \vx_i, 
        \end{align}
        and the \ourmethod \ part of the model can be discarded.
        
        \begin{algorithm}[t]
            \caption{High-order moments computation}\label{algo:horde}
            \begin{algorithmic}[1]
                \Require{$\mW_1, \dots, \mW_K$ sampled from $\{-1;+1\}$}
                \Ensure{$K$ first moments approximations}
                \Procedure{ApproxMoments}{$\vx$}
                \State $\boldsymbol{\phi}_2(\vx) \gets \frac{1}{\sqrt{d}}\left( \mW_1^\top \vx \right) \odot \left( \mW_2^\top \vx \right)$
                \State $k \gets 3$
                \While{$k \leq K$}
                    \State $\boldsymbol{\phi}_k(\vx) = \boldsymbol{\phi}_{k-1}(\vx) \odot \left( \mW_k^\top \vx \right)$
                    \State $ k \gets k + 1$
                \EndWhile
                \State \textbf{return} $\boldsymbol{\phi}_2(\vx), \dots, \boldsymbol{\phi}_K(\vx)$
                \EndProcedure
            \end{algorithmic}
        \end{algorithm}
    
    \subsection{Theoretical analysis}\label{sec:met_theory}
        In this section, we show that optimizing distances between high-order moments is directly related to the Maximum Mean Discrepancy (MMD) \cite{Gretton_2007_NIPS} and the Wasserstein distance.
        We consider the Reproducing Kernel Hilbert Space (RKHS) $\mathcal{H}$ of distributions $f: \Omega \mapsto \mathbb{R}^+$ defined on the compact $\Omega \subset \sR^c$, endowed with the Gaussian kernel $k(\vx, \vy) = e^{-\gamma\|\vx-\vy\|^2}$.
        An image is then represented as a distribution $\mathcal{I} \in \mathcal{H}$ from which we can sample a set of deep features $\{ \vx_i \in \Omega \}_i$.
        We denote $\E_{\vx \sim \mathcal{I}}[\vx] \in \sR^c$ the expectation of $\vx$ sampled from $\mathcal{I}$.
        The high-order moments are denoted using their vectorized forms, that is $\E_{\vx \sim \mathcal{I}}[\vx^{\otimes k}] \in \sR^{c^k}$ where $\vx^{\otimes 2} = \vx \otimes \vx, \vx^{\otimes 3} = \vx \otimes \vx \otimes \vx$, \emph{etc.}
        By extension, we use $\E_{\vx \sim \mathcal{I}}[\vx^{\otimes 1}]$ for the mean.
        We assume that all moments exist for every distributions in $\mathcal{H}$ and we note, $\forall \mathcal{I} \in \mathcal{H}$:
        \begin{align}
            \underset{k}{\max} \ \| \ \E_{\vx \sim \mathcal{I}}[\vx^{\otimes k}] \ \|^2 = K < \infty
        \end{align}
        
        Following \cite{Gretton_2007_NIPS}, the MMD between two distributions $\mathcal{I}$ and $\mathcal{J}$ is expressed as:
        \begin{equation}
            \text{MMD}(\mathcal{I},\mathcal{J}) = \underset{T}{\sup} \ \E_{\vx \sim \mathcal{I}}[T(\vx)] - \E_{\vy \sim \mathcal{J}}[T(\vy)]
        \end{equation}
        The MMD searches for a transform $T$ that maximizes the difference between the expectation of two distributions.
        Intuitively, a low MMD implies that both distributions are concentrated in the same regions of the feature space.
        
        In the following theorem, we show that the distance over high-order moments is an upper-bound of the squared MMD (the proof mainly follows \cite{Gretton_2007_NIPS}):
        
        \begin{theorem}\label{th:upper_bound}
            There exists $A \in \sR^{+*}$ such that, for every distributions $\mathcal{I}, \mathcal{J} \in \mathcal{H}$, the MMD is bounded from above by the $p$ first moments of \ $\mathcal{I}$ and $\mathcal{J}$ by:
            \begin{align}
                \nonumber \text{MMD}^2(\mathcal{I}, \mathcal{J}) \leq & \ A \sum_{k=1}^p \left\|\E_{\vx\sim \mathcal{I}}[\vx^{\otimes k}] - \E_{\vy\sim \mathcal{J}}[\vy^{\otimes k}]\right\|^2 \\
                & + 1 + o(\frac{\gamma^p K}{p!})
            \end{align}
        \end{theorem}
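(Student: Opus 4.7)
The plan is to follow the RKHS analysis of \cite{Gretton_2007_NIPS} while exploiting the analytic structure of the Gaussian kernel. First, I would rewrite $\text{MMD}^2(\mathcal{I},\mathcal{J}) = \|\mu_{\mathcal{I}} - \mu_{\mathcal{J}}\|_{\mathcal{H}}^2$, where $\mu_{\mathcal{I}} = \E_{\vx\sim\mathcal{I}}[k(\vx,\cdot)]$ is the kernel mean embedding. Factoring $k(\vx,\vy) = e^{-\gamma\|\vx\|^2}e^{-\gamma\|\vy\|^2}e^{2\gamma\langle\vx,\vy\rangle}$ and expanding the last exponential as a power series, together with the identity $\langle\vx,\vy\rangle^k = \langle\vx^{\otimes k},\vy^{\otimes k}\rangle$, yields the orthogonal decomposition
\[
\|\mu_{\mathcal{I}}-\mu_{\mathcal{J}}\|_{\mathcal{H}}^2 = \sum_{k=0}^{\infty}\frac{(2\gamma)^k}{k!}\bigl\|\E_{\vx\sim\mathcal{I}}[w(\vx)\vx^{\otimes k}] - \E_{\vy\sim\mathcal{J}}[w(\vy)\vy^{\otimes k}]\bigr\|^2,
\]
where $w(\vx)=e^{-\gamma\|\vx\|^2}\in(0,1]$.

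Second, I would split the series at $k=p$. For the tail $k>p$, the bounds $w(\vx)\le 1$ and $\|\E[\vx^{\otimes k}]\|^2 \le K$, combined with $(a+b)^2\le 2a^2+2b^2$, give a control of the form $\sum_{k>p}(2\gamma)^k\cdot 4K/k! = o(\gamma^p K/p!)$ directly from the tail of the exponential series. For the head $1\le k\le p$, I would decompose $w(\vx) = 1 - (1-w(\vx))$ so that
\[
\E_{\mathcal{I}}[w(\vx)\vx^{\otimes k}] - \E_{\mathcal{J}}[w(\vy)\vy^{\otimes k}] = \bigl(\E_{\mathcal{I}}[\vx^{\otimes k}]-\E_{\mathcal{J}}[\vy^{\otimes k}]\bigr) - \bigl(\E_{\mathcal{I}}[(1-w)\vx^{\otimes k}]-\E_{\mathcal{J}}[(1-w)\vy^{\otimes k}]\bigr).
\]
Applying $\|\va-\vb\|^2\le 2\|\va\|^2+2\|\vb\|^2$ isolates the desired moment-difference term $\|\E_{\mathcal{I}}[\vx^{\otimes k}] - \E_{\mathcal{J}}[\vy^{\otimes k}]\|^2$ from a residual that is uniformly bounded (since $0\le 1-w\le 1$ and the moments are uniformly bounded by $\sqrt{K}$). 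The $k=0$ contribution reduces to $|\E_{\mathcal{I}}[w]-\E_{\mathcal{J}}[w]|^2 \le 1$ and is absorbed similarly.

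Third, collecting the coefficients $(2\gamma)^k/k!$ for $1\le k\le p$ into a single constant $A$ (finite because the sum is finite) and bundling every bounded residual, together with the $k=0$ term, into the additive $+1$ produces the stated inequality.

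The main obstacle is that the Gaussian feature map naturally produces the \emph{weighted} moments $\E[e^{-\gamma\|\vx\|^2}\vx^{\otimes k}]$, whereas the theorem is stated in terms of the raw moments $\E[\vx^{\otimes k}]$. The delicate step is to lump every error created by this substitution, across all $k\in\{0,\dots,p\}$, into a single absolute additive constant without contaminating the $o(\gamma^p K/p!)$ rate controlling the tail $k>p$. Once this splitting is done carefully, the rest of the argument is a standard triangle-inequality computation.
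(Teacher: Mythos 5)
Your decomposition is in fact more faithful to the Gaussian RKHS than the paper's own argument: the identity
$\text{MMD}^2(\mathcal{I},\mathcal{J})=\sum_{k\ge 0}\frac{(2\gamma)^k}{k!}\bigl\|\E_{\vx\sim\mathcal{I}}[w(\vx)\vx^{\otimes k}]-\E_{\vy\sim\mathcal{J}}[w(\vy)\vy^{\otimes k}]\bigr\|^2$
with $w(\vx)=e^{-\gamma\|\vx\|^2}$ is exact, whereas the paper bounds the prefactor $w(\vx)w(\vy)$ by $1$ at the kernel level and then treats $\bigoplus_k\sqrt{a_k}\,\vx^{\otimes k}$ as if it were the feature map, silently eliding precisely the weighted-versus-raw-moment discrepancy you isolate (and note that a pointwise upper bound on the kernel does not by itself upper-bound the MMD, since the cross term $-2\E_{\vx\sim\mathcal{I},\vy\sim\mathcal{J}}[k(\vx,\vy)]$ enters with a negative sign). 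Your tail estimate for $k>p$ and the bound $|\E_{\mathcal{I}}[w]-\E_{\mathcal{J}}[w]|^2\le 1$ for $k=0$ are fine.

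The gap is in your final absorption step. After writing $\E[w\vx^{\otimes k}]=\E[\vx^{\otimes k}]-\E[(1-w)\vx^{\otimes k}]$ and applying $\|\va-\vb\|^2\le 2\|\va\|^2+2\|\vb\|^2$, you are left with the residual $R=2\sum_{k=1}^p a_k\bigl\|\E_{\mathcal{I}}[(1-w)\vx^{\otimes k}]-\E_{\mathcal{J}}[(1-w)\vy^{\otimes k}]\bigr\|^2$. This is a finite constant --- using $1-w(\vx)\le\gamma\|\vx\|^2\le\gamma\rho^2$ with $\rho=\sup_{\vx\in\Omega}\|\vx\|$ one gets $R=O(\gamma^2\rho^4e^{2\gamma\rho^2})$ --- but you give no argument that it fits inside the additive $+1$ once the $k=0$ term has already consumed it, and in general it does not: it is $O(\gamma^2)$, independent of $p$, and easily exceeds $1$ for moderate $\gamma$ and $\rho$. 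What your argument actually establishes is $\text{MMD}^2\le A'\sum_{k=1}^p\|\E_{\mathcal{I}}[\vx^{\otimes k}]-\E_{\mathcal{J}}[\vy^{\otimes k}]\|^2+1+C(\gamma,\Omega,p)+o(\gamma^pK/p!)$, a strictly weaker statement than the theorem as written. To close it you must either show this residual is itself negligible in the sense of the $o(\cdot)$ term (it is not, being independent of $p$), restate the theorem with a generic additive constant in place of $+1$, or justify discarding the weights $w$ before forming the MMD as the paper does --- which is exactly the step that needs care.
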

        
        \begin{proof}
            As the MMD is a distance on the RKHS $\mathcal{H}$ \cite{Gretton_2007_NIPS}, the square of the MMD can be re-written such as:
            \begin{equation}
                \text{MMD}^2(\mathcal{I},\mathcal{J}) = \| \E_{\vx \sim \mathcal{I}}[\boldsymbol{\phi}(\vx)] - \E_{\vy \sim \mathcal{J}}[\boldsymbol{\phi}(\vy)] \|^2_\mathcal{H}
            \end{equation}
            where $\boldsymbol{\phi}$ is defined using the kernel trick $k(\vx, \vy) = \left< \boldsymbol{\phi}(\vx) \ ; \ \boldsymbol{\phi}(\vy) \right>$.
            Then, we can approximate the Gaussian kernel using its Taylor expansion:
            \begin{align}
                \nonumber k(\vx, \vy) &= \exp({-\gamma\|\vx\|^2 - \gamma\|\vy\|^2}) \exp({2\gamma\left<\vx \ ; \ \vy\right>})  \\
                    \nonumber &= \exp({-\gamma\|\vx\|^2 - \gamma\|\vy\|^2}) \sum_{k=0}^{+\infty} \frac{(2\gamma)^k}{k!} \left< \vx ; \vy\right>^k \\
                     &\leq 1 + \sum_{k=1}^{+\infty} a_k \left< \vx^{\otimes k} ; \vy^{\otimes k} \right>
            \end{align}
            where $a_k =\frac{(2\gamma)^k}{k!} > 0$.
            Thus, we can define $\boldsymbol{\phi}$ as the direct sum of all weighted and vectorized moments:
            \begin{align}\label{eq:phi_kernel_trick}
                \boldsymbol{\phi}(\vx) = \bigoplus_{k=1}^{+\infty} \sqrt{a_k} \vx^{\otimes k}
            \end{align}
            As all moments exist, we can swap the expectation and the direct sum.
            Moreover, since the sequence $a_k = \frac{(2 \gamma)^k}{k!} \xrightarrow{} 0$ when $k \xrightarrow{} + \infty$ and the moments are bounded by $K$, the higher-order moment contributions become negligible compared to the $p$ first moments. 
            Thus, we have:
            \begin{align}
                \nonumber \text{MMD}^2(\mathcal{I},\mathcal{J}) \leq& \ 1 + \sum_{k=1}^{+\infty} a_k \| \E_{\vx\sim \mathcal{I}}[\vx^{\otimes k}]  - \E_{\vy\sim \mathcal{J}}[\vy^{\otimes k}] \|^2\\
                \nonumber \leq& \ A \sum_{k=1}^p \| \E_{\vx\sim \mathcal{I}}[\vx^{\otimes k}]  - \E_{\vx\sim \mathcal{J}}[\vx^{\otimes k}] \|^2 \\
                & +  1 + o(\frac{\gamma^p K}{p!})
            \end{align}
            where $A = \underset{k}{\max} \ a_k$.
        \end{proof}
        
        %%%%%%%%%%%%%%%%%%%%%%%%%%%%%%%%%%%%%%%%%%%%%%%%
        
    \setlength{\tabcolsep}{6pt}
    \begin{table*}[t!]
        \footnotesize
        \begin{center}
            \begin{tabular}{|c|c|cccccc|cccccc|}\hline
                 & & \multicolumn{6}{c|}{Cub-200-2011} & \multicolumn{6}{c|}{Cars-196} \\\hline
                Backbone & R@ & 1 & 2 & 4 & 8 & 16 & 32 & 1 & 2 & 4 & 8 & 16 & 32 \\\hline\hline
                \multicolumn{14}{|c|}{Loss functions or mining strategies} \\\hline
                % LiftedStruct \cite{Song_2016_CVPR}&  47.2 & 58.9 & 70.2 & 80.2 & 89.3 & 93.2 & 49.0 & 60.3 & 72.1 & 81.5 & 89.2 & 92.8\\
                % Clustering \cite{Song_2017_CVPR} & 48.2 & 61.4 & 71.8 & 81.9 & - & - & 58.1 & 706 & 80.3 & 87.8 & - & - \\
                % Smart Mining \cite{Harwood_2017_ICCV} & 49.8 & 62.3 & 74.1 & 83.3 & - & - & 64.7 & 76.2 & 84.2 & 90.2 & - & - \\
                % Proxy-NCA \cite{Movshovitz-Attias_2017_ICCV} & 49.2 & 61.9 & 67.9 & 72.4 & - & - & 73.2 & 82.4 & 86.4 & 87.8 & - & - \\
                \multirow{12}{*}{GoogleNet} & Angular loss \cite{Wang_2017_ICCV} & 54.7 & 66.3 & 76.0 & 83.9 & - & - & 71.4 & 81.4 & 87.5 & 92.1 & - & - \\
                 & HDML \cite{Zheng_2019_CVPR} & 53.7 & 65.7 & 76.7 & 85.7 & - & - & 79.1 & 87.1 & 92.1 & 95.5 & - & - \\
                 & DAMLRMM \cite{Xu_2019_CVPR} & 55.1 & 66.5 & 76.8 & 85.3 & - & - & 73.5 & 82.6 & 89.1 & 93.5 & - & - \\
                 & DVML \cite{Lin_2018_ECCV} & 52.7 & 65.1 & 75.5 & 84.3 & - & - & 82.0 & 88.4 & 93.3 & 96.3 & - & - \\
                 & HTL \cite{Ge_2018_ECCV} & 57.1 & 68.8 & 78.7 & 86.5 & 92.5 & 95.5 & 81.4 & 88.0 & 92.7 & 95.7 & 97.4 & 99.0 \\
                \cline{2-14}
                % contrastive loss \cite{Chopra_CVPR_2005, Song_2016_CVPR} & 32.8 & 44.1 & 55.2 & 69.4 & - & - & 35.8 & 47.5 & 59.7 & 71.5 & - & - \\
                 & contrastive loss (Ours) & 55.0 & 67.9 & 78.5 & 86.2 & 92.2 & 96.0 & 72.2 & 81.3 & 88.1 & 92.6 & 95.6 & 97.8 \\
                 & contrastive loss + \ourmethod & \underline{57.1} & \underline{69.7} & \underline{79.2} & \underline{87.4} & \underline{92.8} & \underline{96.3} & \underline{76.2} & \underline{85.2} & \underline{90.8} & \underline{95.0} & \underline{97.2} & \underline{98.8} \\
                % \cline{2-14}
                 & Triplet loss (Ours) & 50.5 & 63.3 & 74.8 & 84.6 & 91.2 & 95.0 & 65.2 & 75.8 & 83.7 & 89.4 & 93.6 & 96.5 \\
                 & Triplet loss + \ourmethod & \underline{53.6} & \underline{65.0} & \underline{76.0} & \underline{85.2} & \underline{91.1} & \underline{95.3} & \underline{74.0} & \underline{82.9} & \underline{89.4} & \underline{93.7} & \underline{96.4} & \underline{98.0} \\
                % \cline{2-14}
                 & Binomial Deviance (Ours) & 55.9 & 67.6 & 78.3 & 86.4 & 92.3 & 96.1 & 78.2 & 86.0 & 91.3 & 94.6 & 97.1 & 98.3 \\
                 & Binomial Deviance + \ourmethod & \textbf{58.3} &\textbf{ 70.4} & \textbf{80.2 }& \textbf{87.7} & \textbf{92.9} & \textbf{96.3} & \underline{81.5} & \textbf{88.5} & \underline{92.7} & \underline{95.4} & \textbf{97.4} & \underline{98.6} \\
                 & Binomial Deviance + \ourmethod{$^\dag$} & \textbf{59.4} &\textbf{ 71.0} & \textbf{81.0 }& \textbf{88.0} & \textbf{93.1} & \textbf{96.5} & \textbf{83.2} & \textbf{89.6} & \textbf{93.6} & \textbf{96.3} & \textbf{98.0} & \textbf{98.8} \\
                \hline
                \multirow{3}{*}{BN-Inception} & Multi-similarity loss \cite{Wang_2019_CVPR} & 65.7 & 77.0 & \textbf{86.3} & \textbf{91.2} & \textbf{95.0} & 97.3 & 84.1 & 90.4 &94.0 & 96.5 & 98.0 & 98.9 \\
                \cline{2-14}
                 & contrastive loss + \ourmethod & \textbf{66.3} & 76.7 & 84.7 & 90.6 & 94.5 & 96.7 & 83.9 & 90.3 & 94.1 & 96.3 & 98.3 & 99.2 \\
                 & contrastive loss + \ourmethod{$^\dag$} & \textbf{66.8} & \textbf{77.4} & 85.1 & 91.0 & 94.8 & 97.3 & \textbf{86.2} & \textbf{91.9} & \textbf{95.1} & \textbf{97.2} & \textbf{98.5} & \textbf{99.4} \\
                \hline
                \multicolumn{14}{|c|}{Ensemble Methods} \\\hline
                \multirow{7}{*}{GoogleNet} & HDC \cite{Yuan_2017_ICCV} & 53.6 & 65.7 & 77.0 & 85.6 & 91.5 & 95.5 & 73.7 & 83.2 & 89.5 & 93.8 & 96.7 & 98.4 \\
                 & BIER \cite{Opitz_2017_ICCV} & 55.3 & 67.2 & 76.9 & 85.1 & 91.7 & 95.5 & 78.0 & 85.8 & 91.1 & 95.1 & 97.3 & 98.7 \\
                 & A-BIER \cite{Opitz_toap_PAMI} & 57.5 & 68.7 & 78.3 & 86.2 & 91.9 & 95.5 & 82.0 & 89.0 & 93.2 & 96.1 & 97.8 & 98.7 \\
                 & ABE \cite{Kim_2018_ECCV} & 60.6 & 71.5 & 79.8 & 87.4 & - & - & 85.2 & 90.5 & 94.0 & 96.1 & - & - \\
                \cline{2-14}
                 & ABE (Ours) & 60.0 & 71.8 & 81.4 & 88.9 & 93.4 & 96.6 & 79.2 & 87.1 & 92.0 & 95.2 & 97.3 & 98.7 \\
                 & ABE + \ourmethod & \textbf{62.7} & \textbf{74.3} &\textbf{ 83.4} & \textbf{90.2} & \textbf{94.6} & \textbf{96.9} & \textbf{86.4} & \textbf{92.0 }& \textbf{95.3} & \textbf{97.4} &\textbf{ 98.6} & \textbf{99.3} \\
                 & ABE + \ourmethod{$^\dag$} & \textbf{63.9} & \textbf{75.7} &\textbf{84.4} & \textbf{91.2} & \textbf{95.3} & \textbf{97.6} & \textbf{88.0} & \textbf{93.2 }& \textbf{96.0} & \textbf{97.9} &\textbf{ 99.0} & \textbf{99.5} \\
                \hline
            \end{tabular}
            \caption{Comparison with the state-of-the-art on Cub-200-2011 and Cars-196 datasets. Results in percents. {$^\dag$} means that the test scores are computed using all the high-order moments (concatenation + PCA to the embedding size).}
            \label{tab:CUB-CARS}
        \end{center}
        \vspace{-1em}
    \end{table*}
    
    \setlength{\tabcolsep}{7pt}
    \begin{table*}[t!]
        \footnotesize
        \begin{center}
            \begin{tabular}{|c|c|cccc|cccccc|}\hline
                 & & \multicolumn{4}{c|}{Stanford Online Products} & \multicolumn{6}{c|}{In-Shop Clothes Retrieval} \\\hline
                Backbone & R@ & 1 & 10 & 100 & 1000 & 1 & 10 & 20 & 30 & 40 & 50 \\\hline
                % LiftedStruct \cite{Song_2016_CVPR} & 62.1 & 79.8 & 91.3 & 97.4 & - & - & - & - & - & - \\
                \multirow{7}{*}{GoogleNet} & Angular loss \cite{Wang_2017_ICCV} & 70.9 & 85.0 & 93.5 & 98.0 & - & - & - & - & - & - \\
                & HDML \cite{Zheng_2019_CVPR} & 68.7 & 83.2 & 92.4 & - & - & - & - & - & - & - \\
                & DAMLRMM \cite{Xu_2019_CVPR} & 69.7 & 85.2 & 93.2 & - & - & - & - & - & - & - \\
                & DVML \cite{Lin_2018_ECCV} & 70.2 & 85.2 & 93.8 & - & - & - & - & - & - & - \\
                % Margin \cite{Wu_2017_ICCV} & 72.7 & 86.2 & 93.8 & 98.0 & - & - & - & - & - & - \\
                % FashionNet \cite{Liu_2016_CVPR_INSHOP} & - & - & - & - & 53.0 & 73.0 & 76.0 & 77.0 & 79.0 & 80.0 \\
                & HTL \cite{Ge_2018_ECCV} & \textbf{74.8} & \textbf{88.3} & \textbf{94.8} & \textbf{98.4} & 80.9 & 94.3 & 95.8 & 97.2 & 97.4 & 97.8\\
                \cline{2-12}
                % Binomial Deviance \cite{Ustinova_2016_NIPS} & 65.5 & 82.3 & 92.3 & 97.6 & - & - & - & - & - & - \\
                 & Binomial Deviance (Ours) & 67.4 & 81.7 & 90.2 & 95.4 & 81.3 & 94.2 & 95.9 & 96.7 & 97.2 & 97.6 \\
                & Binomial Deviance + \ourmethod & \underline{72.6} & \underline{85.9} & \underline{93.7} & \underline{97.9} & \textbf{84.4} & \textbf{95.4} & \textbf{96.8} & \textbf{97.4} & \textbf{97.8} & \textbf{98.1} \\
                \hline
                \multirow{2}{*}{BN-Inception} & Multi-similarity loss \cite{Wang_2019_CVPR} & 78.2 & 90.5 & 96.0 & 98.7 & 89.7 & 97.9 & 98.5 & 98.8 & 99.1 & 99.2 \\
                \cline{2-12}
                & contrastive loss + \ourmethod & \textbf{80.1} & \textbf{91.3} & \textbf{96.2} & \textbf{98.7}& \textbf{90.4} & \underline{97.8} & \underline{98.4} & \underline{98.7} & \underline{98.9} & \underline{99.0} \\
                \hline
            \end{tabular}
            \caption{Comparison with the state-of-the-art on Stanford Online Products and In-Shop Clothes Retrieval. Results in percents.}
            \label{tab:SOP-INSHOP}
        \end{center}
        \vspace{-1.5em}
    \end{table*} % SotA tables
        %%%%%%%%%%%%%%%%%%%%%%%%%%%%%%%%%%%%%%%%%%%%%%%%
        
        This result implies that regularizing high-order moments to be similar enforces similar images to have deep features sampled from similar distributions.
        Thus, deep features from similar images have a higher probability of being concentrated in the same regions of the feature space.
        
        Next, we show a converse relation between high-order moments and the Wasserstein distance:
        \begin{theorem}
            There exists $a \in \sR^{+*}$ such that, for every distributions $\mathcal{I}, \mathcal{J} \in \mathcal{H}$, the squared Wasserstein distance is bounded from below by the $p$ first moments of \ $\mathcal{I}$ and $\mathcal{J}$ by:
            \begin{align}
                W_1^2(\mathcal{I}, \mathcal{J}) \geq a \sum_{k=1}^p \left\|\E_{\vx\sim \mathcal{I}}[\vx^{\otimes k}] - \E_{\vy\sim \mathcal{J}}[\vy^{\otimes k}]\right\|^2 - o(\frac{\gamma^p}{p!})
            \end{align}
        \end{theorem}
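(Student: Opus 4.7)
The plan is to mirror Theorem~\ref{th:upper_bound} in two steps. First I would transfer from $W_1$ to $\mathrm{MMD}$ via a Lipschitz argument, i.e.\ the reverse of the direction Theorem~\ref{th:upper_bound} uses between $\mathrm{MMD}$ and the moments. Second I would lower bound $\mathrm{MMD}^2$ using the very same Taylor expansion of the Gaussian kernel as in Theorem~\ref{th:upper_bound}, this time keeping only the first $p$ (positive) contributions.

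For the first step, I would exploit the Lipschitz continuity of the Gaussian feature map. From
\[
\|\boldsymbol{\phi}(\vx)-\boldsymbol{\phi}(\vy)\|_\mathcal{H}^2 \;=\; 2\bigl(1-e^{-\gamma\|\vx-\vy\|^2}\bigr) \;\le\; 2\gamma\|\vx-\vy\|^2,
\]
the reproducing property gives that every unit-norm $T\in\mathcal{H}$ satisfies $|T(\vx)-T(\vy)|\le\sqrt{2\gamma}\,\|\vx-\vy\|$, so $T/\sqrt{2\gamma}$ is $1$-Lipschitz. Kantorovich--Rubinstein duality then yields $\mathrm{MMD}(\mathcal{I},\mathcal{J})\le\sqrt{2\gamma}\,W_1(\mathcal{I},\mathcal{J})$, equivalently $W_1^2 \ge \mathrm{MMD}^2/(2\gamma)$.

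For the second step I would re-use the direct-sum representation \eqref{eq:phi_kernel_trick}. Writing $\Delta_k = \E_\mathcal{I}[\vx^{\otimes k}]-\E_\mathcal{J}[\vy^{\otimes k}]$ and noting that the degree-zero mass term cancels in the difference of two probability measures, $\mathrm{MMD}^2$ identifies with $\sum_{k\ge 1} a_k \|\Delta_k\|^2$, where $a_k=(2\gamma)^k/k!>0$. Truncating at order $p$ and using $a_0 := \min_{1\le k\le p} a_k > 0$ gives $\mathrm{MMD}^2 \ge a_0 \sum_{k=1}^p \|\Delta_k\|^2 - \sum_{k>p} a_k\|\Delta_k\|^2$. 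The assumption $\max_k\|\E[\vx^{\otimes k}]\|^2\le K$ together with Stirling controls the tail by $o(\gamma^p K/p!)$, exactly as in the proof of Theorem~\ref{th:upper_bound}. Combining with step one and setting $a=a_0/(2\gamma)$ gives the claimed inequality.

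The hard part will be the identification $\mathrm{MMD}^2 = \sum_{k\ge 1} a_k\|\Delta_k\|^2$: the canonical Gaussian feature map actually carries an extra scalar $e^{-\gamma\|\vx\|^2}$ that Theorem~\ref{th:upper_bound} harmlessly bounds above by $1$ but that blocks a clean identification in the lower-bound direction. A tidy workaround is to introduce the auxiliary polynomial kernel $k_p(\vx,\vy) = \sum_{k=1}^p a_k\langle\vx,\vy\rangle^k$, whose mean embedding is exactly $\bigoplus_{k=1}^p\sqrt{a_k}\,\E[\vx^{\otimes k}]$, whose squared MMD is therefore literally $\sum_{k=1}^p a_k\|\Delta_k\|^2$, and whose feature map is $L_p$-Lipschitz on the compact $\Omega$ with $L_p^2 \le \sum_{k=1}^p a_k\, k^2\,\mathrm{diam}(\Omega)^{2(k-1)}$. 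Applying Kantorovich--Rubinstein to $k_p$ would then directly produce $W_1^2 \ge L_p^{-2}\sum_{k=1}^p a_k\|\Delta_k\|^2$, after which the $k$-dependent weights collapse into a single constant $a$ depending only on $\gamma$, $p$ and $\mathrm{diam}(\Omega)$, while the discarded $k>p$ moments contribute the required $o(\gamma^p/p!)$ correction via the boundedness assumption.
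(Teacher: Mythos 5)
Your proposal is correct and shares the paper's overall skeleton (squeeze the moment sum between $W_1$ and an MMD-type quantity), but the execution of the key step is genuinely different and, in fact, tighter. The paper's proof first lower-bounds the Gaussian kernel \emph{pointwise} by $\alpha\sum_{k\ge1}a_k\langle\vx^{\otimes k};\vy^{\otimes k}\rangle$ with $\alpha=e^{-2\gamma K}$, feeds this into \eqref{eq:phi_kernel_trick} to obtain the MMD lower bound \eqref{eq:lower_bound_mmd}, and then cites \cite{Sriperumbudur_2010_JMLR} for $\sqrt{K}\,W_1\ge\text{MMD}$ as in \eqref{eq:lower_bound_w_by_mmd}, yielding $a=\alpha a'/K$. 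You correctly diagnose the obstruction that motivates the paper's $\alpha$ (the prefactor $e^{-\gamma\|\vx\|^2-\gamma\|\vy\|^2}$ prevents the exact identification $\text{MMD}^2=\sum_k a_k\|\Delta_k\|^2$), but instead of a pointwise kernel bound you route around it entirely: you build the auxiliary polynomial kernel $k_p$ whose mean embedding is \emph{exactly} $\bigoplus_{k=1}^p\sqrt{a_k}\,\E[\vx^{\otimes k}]$, so its squared MMD equals $\sum_{k=1}^p a_k\|\Delta_k\|^2$ with no approximation, and you then apply Kantorovich--Rubinstein directly to $k_p$ using the explicit Lipschitz constant $L_p$ of its feature map on the compact $\Omega$. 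This buys two things: (i) it avoids the delicate issue that a pointwise lower bound on a kernel does not automatically transfer to a lower bound on the MMD (the cross term $-2\E_{\mathcal{I}\times\mathcal{J}}[k]$ enters with a negative sign, and the truncated Taylor series can itself be negative when $\langle\vx;\vy\rangle<0$), which is the shakiest step of the paper's argument; and (ii) it makes the $-o(\gamma^p/p!)$ slack in the statement unnecessary, since your chain $W_1^2\ge L_p^{-2}\sum_{k=1}^p a_k\|\Delta_k\|^2\ge (\min_{1\le k\le p}a_k/L_p^2)\sum_{k=1}^p\|\Delta_k\|^2$ already proves the stronger inequality. The price is a constant $a$ that now depends on $p$ and $\mathrm{diam}(\Omega)$ rather than only on $\gamma$ and $K$, which is harmless given the purely existential phrasing of the theorem. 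Your self-contained derivation of $\text{MMD}\le\sqrt{2\gamma}\,W_1$ for the Gaussian kernel is also fine, though it ends up unused once you commit to the polynomial-kernel workaround.
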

        
        \begin{proof}
            Similarly to the \autoref{th:upper_bound}, we can lower-bound the Gaussian kernel using its Taylor expansion:
            \begin{align}
                \nonumber k(\vx, \vy) \geq \alpha \sum_{k=1}^{+\infty} a_k \left< \vx^{\otimes k} ; \vy^{\otimes k} \right>
            \end{align}
            where $\alpha = \exp(-2 \gamma K)$ and $a_k =\frac{(2\gamma)^k}{k!} > 0$.
            Then, by using the definition of $\boldsymbol{\phi}$ from \autoref{eq:phi_kernel_trick}, a lower-bound for the MMD is:
            % Then, we can also approximate the function $\boldsymbol{\phi}$ from the kernel trick in the previous equation by using the direct sum of the weighted and vectorized moments:
            % \begin{align}
            %     \boldsymbol{\phi}(\vx) = \bigoplus_{k=1}^p \sqrt{a_k} \vx^{\otimes k}
            % \end{align}
            % Then, a lower-bound for the MMD is:
            \begin{align}\label{eq:lower_bound_mmd}
                % \nonumber \text{MMD}^2(\mathcal{I},\mathcal{J}) & \geq \alpha \sum_{k=1}^p a_k \| \E_{\vx\sim \mathcal{I}}[\vx^{\otimes k}]  - \E_{\vy\sim \mathcal{J}}[\vy^{\otimes k}] \|^2\\
                \nonumber \text{MMD}^2(\mathcal{I},\mathcal{J}) \geq & \ \alpha \ a' \sum_{k=1}^p \| \E_{\vx\sim \mathcal{I}}[\vx^{\otimes k}]  - \E_{\vy\sim \mathcal{J}}[\vy^{\otimes k}] \|^2 \\
                & - o(\frac{\gamma^p K}{p!})
            \end{align}
            where $a' = \underset{k}{\min} \ a_k$.
            Finally, the MMD is a lower-bound of the Wasserstein distance \cite{Sriperumbudur_2010_JMLR}:
            \begin{align}\label{eq:lower_bound_w_by_mmd}
                \sqrt{K} W_1(\mathcal{I}, \mathcal{J}) \geq \text{MMD}(\mathcal{I}, \mathcal{J})
            \end{align}
            By combining \autoref{eq:lower_bound_mmd} and \autoref{eq:lower_bound_w_by_mmd}, we get the expected lower-bound:
            \begin{align}
                W_1^2(\mathcal{I}, \mathcal{J}) \geq a \sum_{k=1}^p \| \E_{\vx\sim \mathcal{I}}[\vx^{\otimes k}]  - \E_{\vy\sim \mathcal{J}}[\vy^{\otimes k}] \|^2 - o(\frac{\gamma^p}{p!})
            \end{align}
            where $a = \frac{\alpha \ a'}{K}$.
        \end{proof}
        Hence, regularizing high-order moments to be dissimilar enforces dissimilar images to have deep features sampled from different distributions.
        As such, deep features are more distinctive as they are sampled from different regions of the feature space for dissimilar images.
        This is illustrated in \autoref{fig:illustration_horde} ($p=5$) compared to \autoref{fig:illustration_full} ($p=1$).

    %%%%%%%%%%%%%%%%%%%%%%%%%%%%%%%%%%%%%%%%%%%%%%%%%%%%%%%%%%%%
    
\setlength{\tabcolsep}{2.75pt}
\begin{table*}[t]
    \footnotesize
    \centering
    \begin{tabular}{|c|c|c c|c c c|c c c c|c c c c c|c c c c c c|}\hline
        $k$ & 1 & \multicolumn{2}{c}{2} & \multicolumn{3}{|c}{3} & \multicolumn{4}{|c}{4} & \multicolumn{5}{|c}{5} & \multicolumn{6}{|c|}{6} \\\hline
        $n$ & 1 & 1 & 2 & 1 & 2 & 3 & 1 & 2 & 3 & 4 & 1 & 2 & 3 & 4 & 5 & 1 & 2 & 3 & 4 & 5 & 6 \\\hline
        R@1 & 55.9 & \textbf{57.8} & 58.6 & \underline{56.8} & 58.0 & 56.9 & \textbf{57.8} & 58.8 & 57.6 & 56.1 & \underline{57.4} & 57.7 & 56.8 & 56.3 & 53.3 & \underline{57.4} & 57.9 & 57.1 & 55.6 & 54.4 & 50.7 \\
        R@2 & 67.6 & \underline{69.5} & 70.4 & \underline{68.1} & 69.4 & 68.7 & \underline{69.2} & 70.6 & 70.0 & 68.5 & \underline{68.8} & 69.9 & 69.3 & 68.1 & 65.4 & \textbf{69.9} & 70.6 & 70.5 & 68.9 & 66.2 & 63.0 \\
        R@4 & 78.3 & \underline{79.0} & 79.8 & 78.3 & 78.8 & 78.1 & \underline{78.6} & 79.9 & 79.2 & 78.1 & \underline{78.7} & 78.8 & 79.2 & 78.0 & 75.9 & \textbf{79.4} & 80.0 & 79.9 & 78.7 & 76.5 & 74.0 \\
        R@8 & 86.4 & \underline{86.7} & 87.2 & 86.2 & 86.7 & 86.6 & \underline{86.5} & 87.2 & 87.0 & 85.5 & \textbf{87.0} & 87.1 & 87.1 & 86.5 & 84.2 & \underline{86.9} & 87.4 & 87.4 & 86.7 & 85.4 & 82.5 \\
        \hline
    \end{tabular}
    \caption{Impact of the high order moments as regularizers. We report the Recall@K on CUB. $k$ is the number of chosen orders at training time, and $n$ is the order used at testing time to evaluate the performances. $k=n=1$ is the baseline.}
    % Only the parameters of the network and the first order embedding are learned. The high order representations are fixed to 8192 dimension (without embedding) and their parameters are frozen and initialized with Random Maclaurin method.}
    \label{tab:abla_reg}
\end{table*}

\begin{table*}[t]
    \footnotesize
    \centering
    \begin{tabular}{|c|c|c c|c c c|c c c c|c c c c c|c c c c c c|}\hline
        k & 1 & \multicolumn{2}{c}{2} & \multicolumn{3}{|c}{3} & \multicolumn{4}{|c}{4} & \multicolumn{5}{|c}{5} & \multicolumn{6}{|c|}{6} \\\hline
        n & 1 & 1 & 2 & 1 & 2 & 3 & 1 & 2 & 3 & 4 & 1 & 2 & 3 & 4 & 5 & 1 & 2 & 3 & 4 & 5 & 6 \\\hline
        R@1 & 55.9 & \underline{57.0} & 53.4 & \underline{57.6} & 54.7 & 50.6 & \underline{57.9} & 55.4 & 52.3 & 47.6 & \underline{58.1} & 55.9 & 53.1 & 48.4 & 43.7 & \textbf{58.4} & 55.7 & 52.9 & 47.8 & 43.9 & 40.5 \\
        R@2 & 67.6 & \underline{68.3} & 65.4 & \underline{69.9} & 67.0 & 63.0 & \underline{69.5} & 67.1 & 65.0 & 60.2 & \textbf{70.3} & 67.7 & 65.0 & 60.8 & 56.0 & \underline{69.9} & 67.6 & 64.9 & 59.9 & 56.0 & 53.0 \\
        R@4 & 78.3 & 78.3 & 75.8 & \underline{79.1} & 76.8 & 73.6 & \underline{79.6} & 77.5 & 75.2 & 71.0 & \textbf{79.9} & 78.2 & 75.5 & 72.8 & 67.2 & \underline{79.8} & 78.0 & 75.6 & 70.2 & 67.2 & 64.7 \\
        R@8 & 86.4 & 86.2 & 84.2 & \underline{87.0} & 84.7 & 82.4 & \underline{87.1} & 85.8 & 83.6 & 80.2 & \underline{87.1} & 85.2 & 83.9 & 81.7 & 78.0 & \textbf{87.3} & 85.6 & 83.8 & 79.6 & 77.5 & 75.2 \\
        \hline
    \end{tabular}
    \caption{Impact of the high order moments when all parameters are trained. We report the Recall@K on CUB. $k$ is the number of chosen orders at training time, and $n$ is the order used at testing time. $k=n=1$ is the baseline.}
    \label{tab:abla_reg_trained}
\vspace{-1em}
\end{table*}

\begin{table*}[t]
    \footnotesize
    \centering
    \begin{tabular}{|c|c|c c|c c c|c c c c|c c c c c|c c c c c c|}\hline
        k & 1 & \multicolumn{2}{c}{2} & \multicolumn{3}{|c}{3} & \multicolumn{4}{|c}{4} & \multicolumn{5}{|c}{5} & \multicolumn{6}{|c|}{6} \\\hline
        n & 1 & 1 & 2 & 1 & 2 & 3 & 1 & 2 & 3 & 4 & 1 & 2 & 3 & 4 & 5 & 1 & 2 & 3 & 4 & 5 & 6 \\\hline
        R@1 & 55.9 & \underline{57.0} & 53.4 & \underline{57.9} & 56.1 & 54.2 & \underline{57.6} & 55.4 & 54.3 & 53.0 & \textbf{58.3} & 56.3 & 56.0 & 54.7 & 52.4 & \underline{57.9} & 56.6 & 55.8 & 55.0 & 53.9 & 51.6 \\
        R@2 & 67.6 & \underline{68.3} & 65.4 & \underline{69.4} & 67.9 & 66.2 & \underline{69.3} & 67.2 & 66.0 & 65.2 & \textbf{70.4} & 68.7 & 68.1 & 66.9 & 64.7 & \underline{69.5} & 68.8 & 68.3 & 67.7 & 65.2 & 64.0 \\
        R@4 & 78.3 & 78.3 & 75.8 & \underline{79.2} & 77.8 & 76.4 & \underline{79.5} & 77.2 & 77.0 & 75.8 & \textbf{80.2} & 78.5 & 78.3 & 76.9 & 75.6 & \underline{79.6} & 76.6 & 77.9 & 77.9 & 75.3 & 74.4 \\
        R@8 & 86.4 & 86.2 & 84.2 & \underline{86.6} & 85.3 & 84.4 & \underline{87.1} & 85.6 & 84.4 & 84.1 & \textbf{87.7} & 86.3 & 86.0 & 85.4 & 84.1 & \underline{87.0} & 86.4 & 85.6 & 84.8 & 84.0 & 83.7 \\
        \hline
    \end{tabular}
    \caption{Impact of the cascaded architecture when all parameters are trained using \autoref{algo:horde}. We report the Recall@K on CUB. $k$ is the number of chosen orders at training time, and $n$ is the order used at testing time. $k=n=1$ is the baseline.}
    \label{tab:abla_reg_cascade_train}
\vspace{-1em}
\end{table*}

\begin{figure*}
    \centering
    \subfloat{\includegraphics[width=0.475\linewidth]{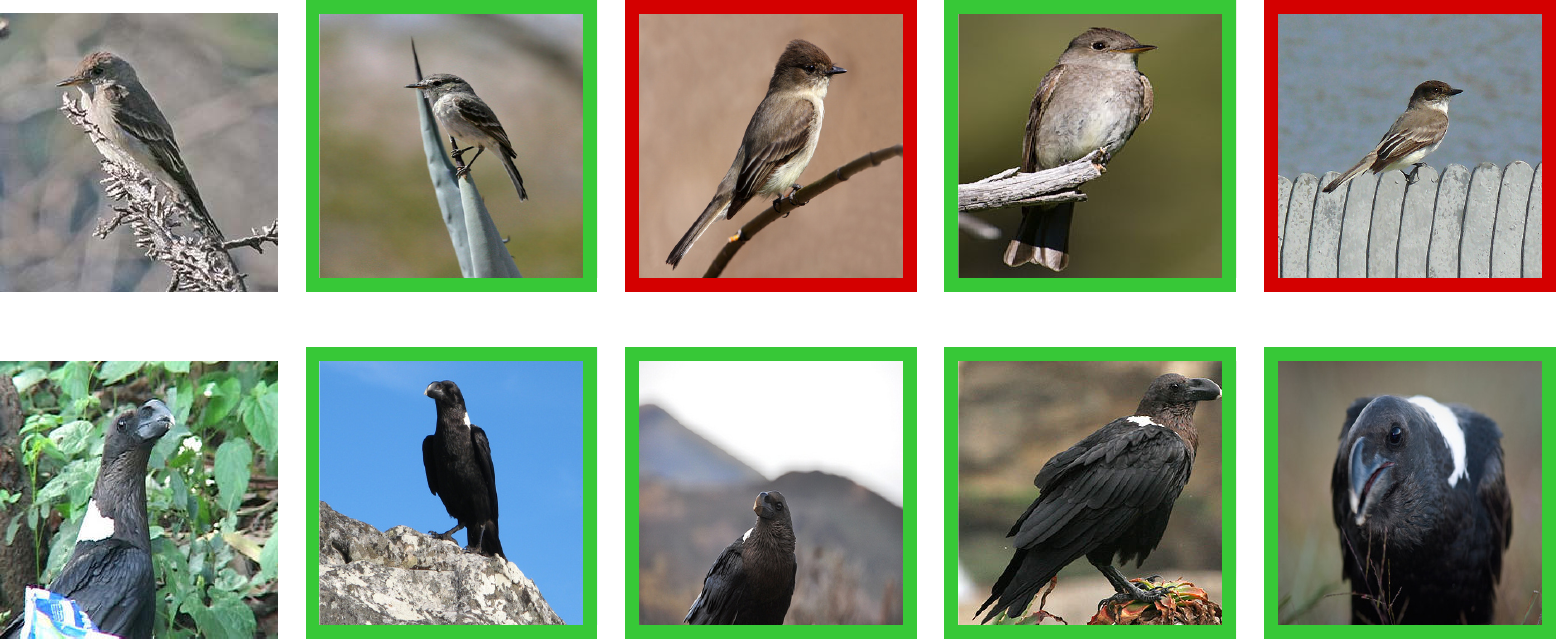}}
    \hfill
    \subfloat{\includegraphics[width=0.475\linewidth]{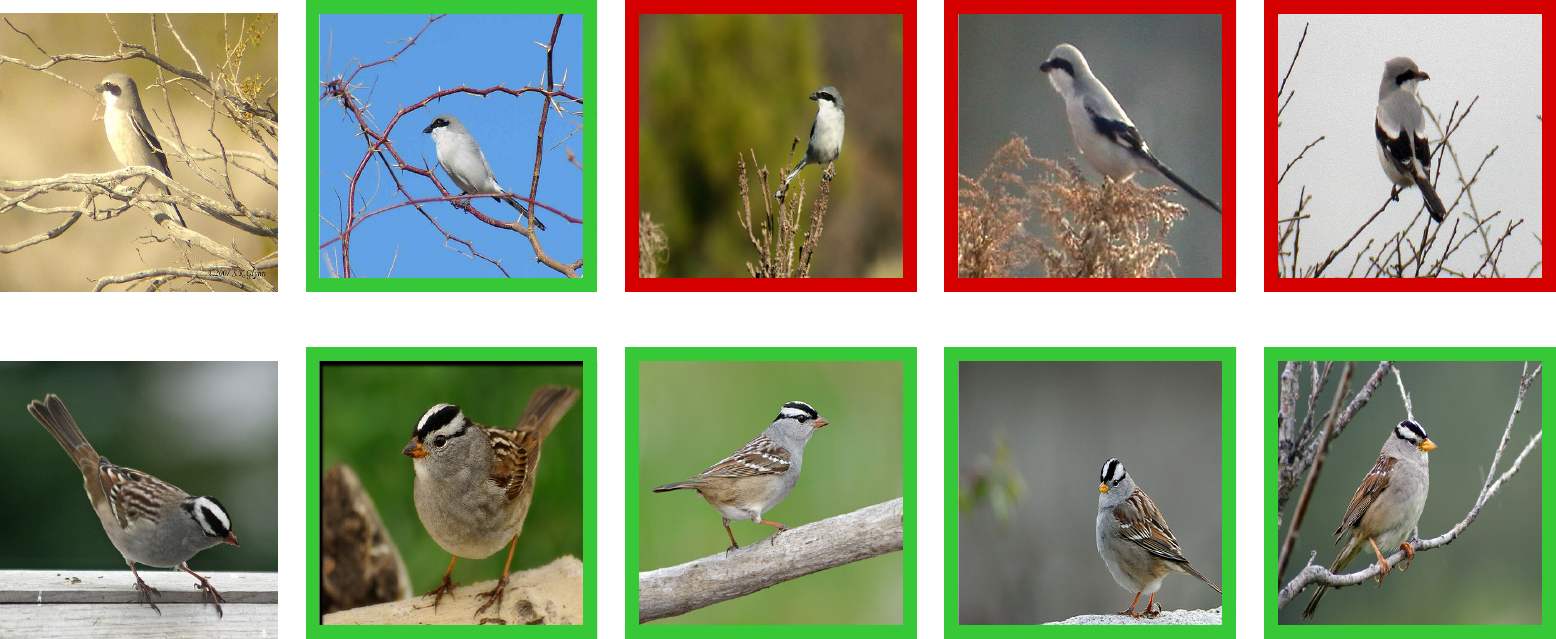}}
    \caption{Qualitative results on CUB for \ourmethod. Correct results are highlighted green (incorrect in red).}
    \label{fig:qualitative_example}
    \vspace{-1em}
\end{figure*} % qualitative visualization
    %%%%%%%%%%%%%%%%%%%%%%%%%%%%%%%%%%%%%%%%%%%%%%%%%%%%%%%%%%%%

\section{Comparison to the state-of-the-art}\label{sec:sota}
    We present the benefits of our method by comparing our results with the state-of-the-art on four datasets, namely CUB-200-2011 (CUB) \cite{CUB_200_2011}, Cars-196 (CARS) \cite{CARS_196}, Stanford Online Products (SOP) \cite{Song_2016_CVPR} and In-Shop Clothes Retrieval (INSHOP) \cite{Liu_2016_CVPR_INSHOP}.
    We report the Recall@K (R@K) on the standard DML splits associated with these datasets.
    Following standard practices, we use GoogleNet \cite{Szegedy_2015_CVPR} as a backbone network and we add a fully connected layer at the end for the embedding.
    % We use the standard evaluation metrics and splits associated with these datasets.
    % For a fair comparison, we use the standard training setup in the relevant literature.
    % We use GoogleNet \cite{Szegedy_2015_CVPR} pre-trained on ImageNet \cite{imagenet} as a backbone network and we add a fully connected layer at the end for the embedding.
    % We train the whole architecture using Adam optimizer \cite{Diederik_2015_ICLR} with a learning rate of $10^{-5}$.
    For CUB and CARS, we train \ourmethod \ using 5 high-order moments with 5 classes and 8 images per instance per batch.
    For SOP and INSHOP, we use 4 high-order moments with a batch size of 2 images and 40 different classes as there are classes with only 2 images in these datasets.
    We use $256\times256$ crops and the following data augmentation at training time: multi-resolution where the resolution is uniformly sampled in $[80\%, 180\%]$ of the crop size, random crop and horizontal flip.
    At inference time, we only use the images resized to $256\times256$.
    For \ourmethod, we use 8192 dimensions for all high-order moments and we fix all embedding dimensions to 512.
    Finally, we take advantage of the high-order moments at testing time by concatenating them together.
    To be fair with other methods, we reduce their dimensionality to 512 using a PCA.
    These results are annotated with a $^\dag$.
    
    First, we show in the upper part of \autoref{tab:CUB-CARS} that \ourmethod \ significantly improves three popular baselines (contrastive loss, triplet loss and binomial deviance).
    These improvements allow us to claim state of the art results for single model methods on CUB with $\mathbf{58.3}$\% R@1 (compared to $57.1$\% R@1 for HTL \cite{Ge_2018_ECCV}) and second best for CARS.
    
    We also present ensemble method results in the second part of \autoref{tab:CUB-CARS}.
    We show that \ourmethod\ is also a benefit to ensemble methods by improving ABE \cite{Kim_2018_ECCV} by $2.7\%$ R@1 on CUB and $7.2\%$ R@1 on CARS.
    To the best of our knowledge, this allows us to outperform the state of the art methods on both datasets with $\mathbf{62.7}\%$ R@1 on CUB and $\mathbf{86.4}\%$ R@1 on CARS, despite our implementation of ABE under-performing compared to the results reported in \cite{Kim_2018_ECCV}.
    
    Note that both single models and ensemble ones are further improved by using the high-order moments at testing: +1.1\% on CUB and +1.7\% on CARS for the single models + \ourmethod \ and +1.2\% on CUB and +1.6\% on CARS for ABE + \ourmethod.
    
    Furthermore, we show that \ourmethod \ generalizes well to large scale datasets by reporting results on SOP and INSHOP in \autoref{tab:SOP-INSHOP}.
    \ourmethod\ improves our baseline binomial deviance by $5.2\%$ R@1 on SOP and $3.1\%$ R@1 on INSHOP.
    This improvement allows us to claim state of the art results for single model methods on INSHOP with $\mathbf{84.2}\%$ R@1 (compared to $80.9\%$ R@1 for HTL) and second best on SOP with $72.6\%$ R@1 (compared to $74.8\%$ R@1 for HTL).
    Remark also that \ourmethod \ outperforms HTL on 3 out of 4 datasets.
    
    We also report some results with the BN-Inception \cite{Ioffe_2015_ICML}.
    Our model trained with \ourmethod \ and contrastive loss leads to similar results compared to the recent MS loss with mining \cite{Wang_2019_CVPR} on smaller datasets while on larger datasets we outperform it by 1.9\% on SOP and by 0.7\% on INSHOP.
    By using the high-order moments are testing, performances are further increased and outperforms MS loss with mining by 1.1\% on CUB and by 2.1\% on CARS.
    
    Finally, we show some example queries and their nearest neighbors in \autoref{fig:qualitative_example} on the test split of CUB.

\section{Ablation study}\label{sec:abation_studies}
    In this section, we provide an ablation study on the different contributions of this paper.
    We perform 3 experiments on the CUB dataset \cite{CUB_200_2011}.
    The first experiment shows the impact of high-order regularization on a standard architecture while the high-order moments are consistently approximated using the Random Maclaurin approximation.
    The second experiment illustrates the benefit of learning the high-order moments projection matrices.
    The last experiment confirms the statistical consistency of our cascaded architecture when the parameters are learned.
    
    \subsection{Regularization effect}\label{sec:abla_regul_power}
        In this section, we assess the regularization impact of \ourmethod.
        To that aim, we use the baseline detailed in \autoref{sec:sota} and we train the architecture with a number of high-order moments varying from 2 to 6.
        In this first experiment, the computation of the high-order moments does not rely on the cascade computation approach of \autoref{eq:cascaded_computation}.
        Instead, the matrices to approximate the high-order moments are untrainable and sampled using the Random Maclaurin method of \autoref{eq:maclaurin}.
        Remark also that the embedding layers on all high-order moments are not added.
        We use the binomial deviance loss with the standard parameters \cite{Ustinova_2016_NIPS}.
        The results are shown in \autoref{tab:abla_reg}.
        
        First, we can see that \ourmethod \ consistently improves the baseline from 1\% to 2\% in R@1.
        These results corroborate the insights of our theoretical analysis in \autoref{sec:method_overview} and also provide a quantitative evaluation of the behavior observed in \autoref{fig:illustration} on the retrieval ranking.
        When considering the high-order moments as representations, we observe improved results with respect to the baseline for orders 2 and 3.
        Note however that the reported high-order results are not comparable to the first order as the similarity measure is computed on the 8192 dimensional representations.
        While adding orders higher than 2 does not seem interesting in terms of performances, we found that the training process is more stable with 5 or 6 orders than only 2.
        This is observed in practice by measuring the Recall@K with $K \geq 8$ which tend to vary less between training steps.
        Moreover on the CUB dataset, while the baseline requires around 6k steps to reach the best results, we usually need 1k steps less to reach higher accuracy with \ourmethod.
    
    \subsection{Statistical consistency}\label{sec:abla_statistical_consistency}
        To evaluate the impact of estimating only informative high-order moments, we first train the projection matrices and the embeddings but without the cascade architecture and report the results in \autoref{tab:abla_reg_trained}.
        
        In this second experiment, we empirically show that such scheme also increases the baseline by at least 1\% in R@1.
        Notably, by focusing on the most informative high-order moment components, \ourmethod \ further improves the performances of the untrainable \ourmethod\ from 57.8\% to 58.4\%.
        However, the retrieval performances of the high-order representations are heavily degraded compared to \autoref{tab:abla_reg}.
        We interpret these results as an inconsistent estimations of the high-order moments due to overfitting the model.
        For example, the 6\% loss in R@1 for the third-order moment between the first and the second experiments suggests a reduced interest for even higher-order moments.
        
        For the third experiment, we report the results of our cascaded architecture in \autoref{tab:abla_reg_cascade_train}.
        Interestingly, the high-order moments computed from the cascaded architecture perform almost identically to those computed from the untrained method \autoref{tab:abla_reg} but with a smaller dimension.
        Moreover, we keep the performance improvement of the second experiments of \autoref{tab:abla_reg_trained}.
        This confirms that the proposed cascaded architecture does not overfit its estimations of the high-order moments while still improving the baseline.
        Finally, this cascaded architecture only produces a small computational overhead during the training compared to the architecture without the cascade.
        % Finally, thanks to the cascaded architecture, the number of additional parameters is reduced (and linear in the number of high-order moments.
        % This only leads to a small computation overhead during the training.
        
        % Using this computation strategy, we only have advantages: (1) the performance increase from \autoref{tab:abla_reg_trained} is maintained, (2) the computation cost and the number of parameters are reduced from quadratic to linear in the number of order, (3) the high-order results are almost identical to those in \autoref{tab:abla_reg} but with a smaller dimension.
        % This last experiment confirms that our cascaded architecture provides a consistent estimation of the high-order moments with a smaller number of parameters and a smaller overhead in training computation.

\section{Conclusion}\label{sec:ccl}
    In this paper, we have presented \ourmethod, a new deep metric learning regularization scheme which improves the distinctiveness of the deep features.
    This regularizer, based on the optimization of the distance between the distributions of the deep features, provides consistent improvements to a wide variety of popular deep metric learning methods.
    We give theoretical insights that show \ourmethod \ upper-bounds the Maximum Mean Discrepancy and lower-bounds the Wasserstein distance.
    The computation of high-order moments is tackled using a trainable Random Maclaurin factorization scheme which is exploited to produce a cascaded architecture with small computation overhead.
    Finally, \ourmethod \ achieves very competitive performances on four well known datasets.
    
\section*{Acknowledgements}
    Authors would like to acknowledge the COMUE Paris Seine University, the Cergy-Pontoise University and M2M Factory for their financial and technical support.

\bibliographystyle{ieee_fullname}
\bibliography{biblio}

\end{document}